\newtheorem{theorem}{Theorem}[section]
\newtheorem{corollary}{Corollary}[theorem]
\newtheorem{lemma}[theorem]{Lemma}
\title{Time Resource Networks}
\author{Szymon Sidor  \and Peng Yu  \and Cheng Fang \and Brian Williams\\
Computer Science \& Artificial Intelligence Lab, MIT, Cambridge, MA\\
\texttt{\{sidor,yupeng,cfang,williams\}@mit.edu}}
\begin{document}

\maketitle

\begin{abstract}
  The problem of scheduling under resource constraints is widely applicable. One prominent example is power management, in which we have a limited continuous supply of power but must schedule a number of power-consuming tasks. Such problems feature tightly coupled continuous resource constraints and continuous temporal constraints.

  We address such problems by introducing the Time Resource Network (TRN), an encoding for resource-constrained scheduling problems. The definition allows temporal specifications using a general family of representations derived from the Simple Temporal network, including the Simple Temporal Network with Uncertainty, and the probabilistic Simple Temporal Network (Fang et al. (2014)).

  We propose two algorithms for determining the consistency of a TRN: one based on Mixed Integer Programing and the other one based on Constraint Programming, which we evaluate on scheduling problems with Simple Temporal Constraints and Probabilistic Temporal Constraints.
\end{abstract}
\section{Introduction}
Temporal Networks scheduling algorithms support diverse formulations useful in modeling practical problems. Examples include dynamical execution strategies based on partial knowledge of uncertain durations, and strategies to upper-bound the probability of failing to satisfy temporal constraints given distributions over uncertain durations. However, it is not obvious how to apply them in scenarios with resource usage constraints. While some prior work exists in operations research literature, known as project scheduling or job-shop scheduling, much of the focus is on discrete resources. We attempt to narrow the gap between the two independent bodies of work.

As a motivating example, consider the following Smart House scenario. A $150W$ generator is available, and we know that the resident returns home at some time defined by a Normal distribution $N(5pm, 5\ \text{minutes})$. Moreover we know that sun sets at time defined by $N(7pm, 1\ \text{minute})$. We would like to meet the following constraints with the overall probability at least $98\%$:
\small{
  \begin{itemize}
  \setlength\itemsep{0.00em}

  \item Wash clothes (duration: $2$ hours, power usage: $130W$) before user comes back from work
  \item Cook dinner (duration: $30$ minutes, power usage: $100W$) ready within 15 minutes of user coming back from work
  \item Have the lights on (power usage: $80W$) from before sunset to at least midnight.
  \item Cook a late night snack (duration: $30$ minutes, power usage: $20W$) between 10pm and 11pm.
  \end{itemize}
}
While probabilistic constraints can be  modeled using probabilistic Simple Temporal Networks \cite{Fang2014} and solved accordingly, there is no known model which captures the tightly coupled resource constraints.

In this paper, we introduce the Time Resource Network (TRN), a general framework capable of encoding scenarios similar to the example described. We describe two algorithms which schedules resource usage given TRN models, one based on a standard encoding as a mixed integer program (MIP) and a novel algorithm leveraging prior specialized algorithms for solving temporal problems. Using the algorithms, we are able to derive a solution to the above example which meets the constraints with $99.7\%$ probability (presented on Figure \ref{fig:pstnu_scheduling}). We also show through benchmarking that the novel algorithm is significantly faster even when the MIP encoding is solved with state-of-the-art commercial solvers.

\begin{figure}[H]
\begin{center}
\includegraphics[width=0.48\textwidth]{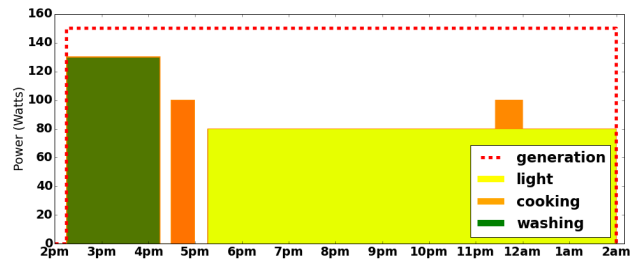}
\caption{Depiction of solution to TRN spanning a pSTN.}
\label{fig:pstnu_scheduling}
\end{center}
\end{figure}

\section{Related Work}
One of the earliest mentions of a scheduling problem being solved in an algorithmic fashion can be found in \cite{johnson1954optimal}, although there's evidence that the problem was already considered in unpublished versions of \cite{bellman1956mathematical}. This publication considers the following statement of scheduling problem. We have $n$ items and $m$ stages and $A_{i,j}$ denoting the time for $i$-th item to be processed by stage $j$. All the items must be processed by different stages in order (for example first stage is printing of a book and second stage is binding). The publication considers $m=2$ and $m=3$ and arrives at the solution that \textit{``permits one to optimally arrange twenty production items in about five minutes by visual inspection''}. It turns out that the solution to the problem for $m \geq 3$ is NP-hard (\cite{garey1976complexity}). In \cite{wagner1959integer} an Integer Programming solution to the scheduling problem was presented, with a comment stating that it \textit{``is a single model which encompasses a wide variety of machine-scheduling situations''}.

In \cite{pritsker1969multiproject}, a generalization of scheduling problems is considered, which allows for multiple resource constraints. However, the proposed solution uses a discrete time formulation, which, depending on required accuracy, can substantially decrease performance. In 1988 a technique was proposed which can handle resource constraints and continuous time (\cite{bartusch1988scheduling}). The proposed approach can be thought of as resource constrained scheduling over Simple Temporal Networks (STN).

In \cite{dechter1991temporal}, a notion of Simple Temporal Problem was introduced which allows one to solve problems with simple temporal constraints of form $l \leq t_y - t_x \leq u$. This concept was later extended with various more sophisticated notions of temporal constraints. \cite{vidal1996dealing} defined an uncertain temporal constraint, where the duration between two time events can take a value from an interval $[l,u]$, which is unknown during the time of scheduling (uncertain duration constraints). \cite{morris2001dynamic} describes a pseudopolynomial algorithm for handling uncertain duration constraint, where we are allowed to make a scheduling decisions based on knowledge of uncertain durations from the past (Dynamic controllability). The algorithm is later improved to polynomial complexity (\cite{morris2005temporal}). Finally, \cite{Fang2014} provides a non-linear optimization based solver for uncertain temporal constraints where the duration of the constraint can come from arbitrary probabilistic distribution.

\section{Problem statement}
In this section we introduce a novel formulation - Time Resource Network (TRN). While the results presented in this paper can be extended to multiple types of resources being constrained simultaneously (for example electricity, water, fuel, CPU time and memory among others), for simplicity we consider only one type of constrained resource in this work. Additionally, we only consider the problem of consistency, but the techniques presented can be extended to handle optimization over constrained schedules.

\subsection{Abstract Temporal Network}
We wish to define TRN to support a general class of temporal networks. We thus define the notion of Abstract Temporal Network as a 3-tuple $ATN=\langle E, C, X\rangle$ where $E$ is a set of controllable events, $C$ is a set of simple temporal constraints \cite{dechter1991temporal} and $X$ represents any additional elements such as additional constraints and variables.

\paragraph{Schedule}
A schedule for an $ATN= \langle E,C,X \rangle$ is a mapping $s: \texttt{E} \rightarrow \mathbb{R}$ from events in ATN to their execution times.

\paragraph{Temporal Consistency}
\label{temporal_consistency}
For an $ATN=\langle E,C,X \rangle$ we define a predicate $TC_s(ATN) = stn-consistent(E,C,s) \wedge extra-criteria(E,C,X,s)$, which denotes the $ATN$ is \textbf{temporally consistent} under schedule $s$. $stn-consistent(E,C,s)$ represents STN consistency as defined in \cite{dechter1991temporal}.  $extra-criteria(E,C,X,s)$ depends on the type of the particular ATN. We say that $ATN$ is temporally consistent (denoted by $TC(ATN)$), if there exists at schedule $s$ such that $TC_s(ATN)$.

\paragraph{Example}
An example of a network that satisfies the ATN interface is Simple Temporal Network with Uncertainty (STNU) described in \cite{vidal1996dealing}. The set $E$ is composed of all the activated and received events, $C$ is the set of requirement links, $X$ is the set of all the contingent links. One way to define is $TC(ATN)$ is to be true if and only if the networks is strongly controllable (which already implies $stn-consistent(E,C,s)$).

\subsection{Time Resource Network}
\label{sec:trn_definition}
A Time Resource Network is described by a tuple $TRN = \langle ATN, R \rangle$, where $ATN$ is an Abstract Temporal Network and $R={src_1, ..., src_n}$ is a set of \textbf{simple resource constraints}, each of which is a triplet $ \langle x, y, r \rangle$, where $x, y \in$ E and $r \in \mathbb{R}$ is the amount of resource, which can be positive (consumption) and negative (generation). Given a schedule $s$ for any time $t \in \mathbb{R}$ we define \textbf{resource usage} for $src=\langle x,y,r \rangle$ as:
\begin{align*}
u_s(src, t) = \begin{cases}
r & \text{if}\ s(x) \leq t < s(y)\\
0 & \text{otherwise}
\end{cases}
\end{align*}
Intuitively, simple resource constraint encodes the fact that between time $s(x)$ and $s(y)$  resource is consumed (generated) at the rate $|r|$ per unit time for positive (negative) $r$.

Our notation is inspired by \cite{bartusch1988scheduling}. The authors have demonstrated that it is possible encode arbitrary piecewise-constant resource profile, by representing each constant interval by a simple resource constraint and joining ends of those intervals by simple temporal constraints.

\subsection{Resource consistency}
For a schedule $s$ we define a \textbf{net-usage} of a resource at time $t \in \mathbb{R}$ as:
\[
U_s(t) = \sum_{\forall_{src_i \in R}} u_s(src_i, t)
\]
$R$ is the set of all the resource constraints. We say that the network is \textbf{resource consistent} under schedule $s$ when it satisfies predicate $RC_s(TRN)$, i.e.
\begin{align}
\label{usage_for_all}\forall_{t \in \mathbb{R}} . U_s(t) \leq 0
\end{align}
Intuitively, it means that resource is never consumed at a rate that is greater than the generation rate. We say that $TRN$ is resource consistent, if there exists $s$, such that $RC_s(TRN)$ is true.

\subsection{Time-resource consistency}
$TRN=(ATN, R)$ is \textbf{time-resource consistent} if there exists a schedule $s$ such that $RC_s(TRN) \wedge TC_s(ATN)$. Determining whether a $TRN$ is time-resource consistent is the central problem addressed in this publication.

\subsection{Properties of TRN}
Before we proceed to describe algorithms for determining time-resource consistency it will be helpful to understand some properties common to every TRN.
\begin{lemma}
\label{resource_checking}
For a $TRN$ a schedule $s$ is resource consistent if and only if
\begin{align}
\label{eq:resource_consistency} \forall_{e \in E} U_s(s(e)) \leq 0
\end{align}
i.e. resource usage is non-positive a moment after all of the scheduled events.
\end{lemma}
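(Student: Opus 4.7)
The plan is to prove the two directions separately. The forward direction ($\Rightarrow$) is immediate: resource consistency says $U_s(t)\le 0$ for every $t\in\mathbb R$, and in particular this holds at $t=s(e)$ for each $e\in E$. All the substance is in the converse ($\Leftarrow$), for which I would exploit the fact that $U_s$ is a finite sum of very simple step functions.

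My key observation would be that $U_s$ is piecewise constant and right-continuous, with breakpoints contained in the finite set $T=\{s(e):e\in E\}$. To see this, note that for each $src_i=\langle x_i,y_i,r_i\rangle\in R$, the summand $u_s(src_i,\cdot)$ takes the value $r_i$ on the half-open interval $[s(x_i),s(y_i))$ and $0$ elsewhere; it is therefore right-continuous and changes value only at $s(x_i)$ and $s(y_i)$, both of which are in $T$. Summing over the finite set $R$ preserves both properties for $U_s$.

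With this in hand, I would handle an arbitrary $t\in\mathbb R$ by two cases. If $t<\min T$, then for every $src_i$ the condition $s(x_i)\le t$ fails, so $U_s(t)=0\le 0$. Otherwise pick $e^\star$ attaining $\max\{s(e):e\in E,\ s(e)\le t\}$; by construction the interval $[s(e^\star),t]$ contains no element of $T$ other than $s(e^\star)$ itself, so piecewise-constancy and right-continuity at $s(e^\star)$ give $U_s(t)=U_s(s(e^\star))$, and the hypothesis $U_s(s(e^\star))\le 0$ finishes the argument.

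The main obstacle I anticipate is not the case analysis but the bookkeeping in the middle step: one must be careful that the interval on which each $u_s(src_i,\cdot)$ equals $r_i$ is \emph{half-open} of the form $[s(x_i),s(y_i))$, since it is this specific form — rather than, say, a closed or open interval — that makes $U_s$ right-continuous and justifies evaluating it at the event times $s(e)$ themselves (``a moment after'' in the informal statement). Once that detail is pinned down, everything else reduces to a finite check over $E$.
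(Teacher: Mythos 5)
Your proof is correct and takes essentially the same route as the paper's: both reduce the value of $U_s$ at an arbitrary time $t$ to its value at the last scheduled time point at or before $t$ (or to $0$ if no such point exists), using the fact that $U_s$ can only change value at scheduled times. The only difference is presentational --- the paper argues by contradiction and leaves the right-continuity of $U_s$ at scheduled points (the consequence of the half-open intervals $[s(x_i), s(y_i))$ that you correctly flag) implicit, whereas you argue directly and make that detail explicit.
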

\begin{proof}

$\Rightarrow$ Follows from definition of resource-consistency.\\
$\Leftarrow$ We say a time point $t \in \mathbb{R}$ is scheduled if there exists an event  $e \in E$ such that $t = s(e)$. Assume for contradiction, that the right side of the implication is satisfied, but the schedule is not resource consistent. That means that there exists a time point $t_{danger}$ for which $U_s(t_{danger}) > 0 $. Notice that by assumption $t_{danger}$ could not be scheduled. Let $t_{before}$ be the highest scheduled time point is smaller than $t_{danger}$. Notice that if no such time point existed, that would mean that there is no resource constraint $(x,y,r)$ such that $s(x) \leq t_{danger} < s(y)$, so $U_s(t_{danger})=0$ . By assumption, $U_s(t_{before}) < 0$.  We can therefore assume that $t_{before}$ exists. Notice that by definition of $t_{before}$ and simple resource constraints, $U_s(t)$ for $t_{before} \leq t \leq t_{danger}$ is constant. If it wasn't there would be another scheduled point between $t_{before}$ and $t_{danger}$, but we assumed that $t_{before}$ is highest scheduled point smaller than $t_{danger}$. Therefore $ U_s(t_{danger}) = U_s(t_{before})$. But we assumed that $U_s(t_{danger}) > 0$ and $U_s(t_{before}) < 0 $ Contradiction.
\end{proof}
\begin{corollary}
\label{cor:ordering}
Given a $TRN$ and two schedules $A$ and $B$ where all events occur in the same order, $A$ is resource consistent if and only if $B$ is resource consistent.
\end{corollary}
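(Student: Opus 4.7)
The plan is to reduce the claim to a pointwise equality of net-usage values and then observe that those values depend only on the relative ordering of events. Concretely, by Lemma~\ref{resource_checking} we only need to check the finite set of conditions $U_s(s(e)) \leq 0$ for $e \in E$, so it suffices to prove the stronger statement that $U_A(A(e)) = U_B(B(e))$ for every $e \in E$; resource consistency of $A$ then coincides with resource consistency of $B$.

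To prove that pointwise equality, I would unfold $U_s(s(e)) = \sum_{src_i \in R} u_s(src_i, s(e))$ and show the summands match term-by-term. Fix a simple resource constraint $src_i = \langle x_i, y_i, r_i \rangle$. From the definition of $u_s$, the value $u_s(src_i, s(e))$ equals $r_i$ when $s(x_i) \le s(e) < s(y_i)$ and $0$ otherwise, so it is determined entirely by the two order relations ``$s(x_i)$ (weakly) precedes $s(e)$'' and ``$s(e)$ strictly precedes $s(y_i)$''. The hypothesis that events occur in the same order in $A$ and $B$ means precisely that for any two events $p, q \in E$, $A(p) < A(q)$, $A(p) = A(q)$, and $A(p) > A(q)$ are equivalent to the same relations for $B(p), B(q)$. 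Applying this with $(p,q) = (x_i, e)$ and $(p,q) = (e, y_i)$ gives $u_A(src_i, A(e)) = u_B(src_i, B(e))$. Summing over $i$ yields $U_A(A(e)) = U_B(B(e))$, which finishes the argument via Lemma~\ref{resource_checking}.

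The main subtlety, and hence the only real obstacle, is the handling of ties: the predicate $s(x_i) \leq s(e) < s(y_i)$ mixes weak and strict inequality, so an interpretation of ``same order'' that only preserves strict inequalities would be insufficient. I would therefore make explicit in the proof that ``same order'' is taken to preserve equalities as well (the natural reading, and the one already used implicitly in Lemma~\ref{resource_checking}'s setup), so that both the $\leq$ and the $<$ comparisons transfer from $A$ to $B$. Once that is pinned down, the rest of the proof is a direct case analysis on whether $src_i$ is active at $e$, with no further calculation required.
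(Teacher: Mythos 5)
Your proposal is correct and follows essentially the same route as the paper: both reduce the claim via Lemma~\ref{resource_checking} to checking net usage at the scheduled events and then observe that $U_s(s(e))$ depends only on the relative ordering of events, so it is identical for $A$ and $B$. Your version merely spells out the term-by-term comparison and the tie-handling subtlety that the paper's one-line argument leaves implicit.
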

\begin{proof}
Notice that if we move execution time of arbitrary event, while preserving the relative ordering of all the events, then net resource usage at that event will not change. Therefore by lemma \ref{resource_checking},  $A$ is resource-consistent if and only if $B$ is resource-consistent.
\end{proof}


\section{Approach}
In this section we present two approaches for determining time-resource consistency of a TRN. One of them involves Mixed Integer Programming (MIP) and the other Constraint Problem (CP) formulations.
\subsection{Definitions}
Let's take a $TRN=\langle ATN, R \rangle$ where $R={src_1, ..., src_n}$ and $src_i = \langle x_i, y_i, r_i \rangle$ as defined in section \ref{sec:trn_definition}. Let's denote all the events relevant for resource constraints as $RE \subseteq E$, i.e.
\begin{align*}
RE = \{ x_i | \langle x_i, y_i, r_i \rangle \in R \} \cup \{ y_i | \langle x_i, y_i, r_i \rangle \in R \}
\end{align*}
Additionally, let's introduce resource-change at event $e \in E$ as:
\begin{align*}
\Delta(e) = \sum_{\langle x_i, y_i, r_i \rangle \in R, x_i = e} r_i + \sum_{ \langle x_i, y_i, r_i \rangle \in R, y_i = e} -r_i
\end{align*}
Intuitively $\Delta(n)$ is the amount by which resource usage changes after time $s(n)$ under schedule $s$.

\subsection{Mixed Integer Programming based algorithm}
Mixed Integer Programming (\cite{markowitz1957solution}) allows one to express scheduling problems in an intuitive way. In this section we present a way to formulate TRN as a MIP problem. The technique is very similar to the ones used in state of the art solvers for general scheduling  \cite{patterson1984comparison} \cite{bartusch1988scheduling}. Therefore, the purpose of this section is not to introduce a novel approach, but to demonstrate that those algorithms are straightforward to express using TRN formulation. Let \texttt{TC-formulation(ATN)} be a MIP-formulation that has a solution if an only if $TC(ATN)$. For some types of $ATN$ such a formulation might not exist and in those cases MIP-based algorithm cannot be applied.

The following MIP program has a solution if and only if the TRN is time-resource-consistent:
\begin{align}
\label{eq:mip0} & \forall_{e \in E}.              & 0 \leq e \leq M \\
\label{eq:mip1} & \forall_{e_1, e_2 \in RE, e_1 \neq e_2}. & e_1 - e_2 \geq - x_{e_1,e_2} M \\
\label{eq:mip2} & \forall_{e_1, e_2 \in RE, e_1 \neq e_2}. & e_1 - e_2 \leq (1.0 - x_{e_1,e_2}) M\\
\label{eq:mip3} & \forall_{e_1, e_2 \in RE, e_1 \neq e_2}. & x_{e_1,e_2} + x_{e_2,e_1}  = 1\\
\label{eq:mip4} & \forall_{e_1, e_2 \in RE, e_1 \neq e_2}. & x_{e_1,e_2} \in \{ 0, 1 \} \\
\label{eq:mip5} & \forall_{e_1 \in RE}.                    & \sum_{e_2 \in RE} x_{e_2, e_1} \Delta(e_2) \leq 0\\
\label{eq:mip6} & \texttt{TC-formulation(ATN)}
\end{align}

Variable $M$ denotes the time horizon, such that all the variables are scheduled between $0$ and $M$. This definition is imposed in eq. \ref{eq:mip0}.
Variables $x_{e_1,e_2}$ are order variables, i.e.
\begin{align*}
x_{e_1, e_2} = \begin{cases}
1 &\text{ if }s(e_1) \leq s(e_2) \\
0 &\text{ otherwise}
\end{cases}
\end{align*}
Equations \ref{eq:mip1}, \ref{eq:mip2}, \ref{eq:mip3}, \ref{eq:mip4} enforce that definition. In particular equations \ref{eq:mip1}, \ref{eq:mip2} enforce the ordering using big-$M$ formulation that is correct because of time horizon constraint. In theory eq. \ref{eq:mip3} could be eliminated by careful use of $\epsilon$ (making sure no two timepoints are scheduled at exactly the same time), but we found that in practice they result in useful cutting planes that decrease the total optimization time. Equation \ref{eq:mip5} ensures resource consistency by lemma \ref{resource_checking}. Finally eq. \ref{eq:mip6} ensures time consistency.

Solving that Mixed-Integer Program will yield a valid schedule if one exists, which can be recovered by inspecting values of variables $t \in E$.

\subsection{Constraint Programming based algorithm}
The downside of MIP approach is the fact that the ATN must have a MIP formulation (e.g. pSTN does not have one). In this section we present a novel CP approach which addresses this concern. The high level idea of the algorithm is quite simple and is presented in algorithm \ref{hl_algo}. In the second line, we iterate over all the permutations of the events. On line 3 we use \texttt{resource\_consistent} function to check resource consistency, which by corollary \ref{cor:ordering} is only dependent on the chosen permutation. On line five we use $TC$ checker to determine if network is time consistent - the implementation depends on the type of $ATN$ and we assume it is available. Function $encode\_as\_stcs$ encodes permutation using simple temporal constraints. For example if $\sigma(1) = 2$ and $\sigma(2) = 1$ and $\sigma(3) = 3$, then we can encode it by two STCs: $ 2 \leftarrow 1 $ and $1 \leftarrow 3$.

\begin{algorithm}[h]
    \label{hl_algo}
    \KwData{$TRN=\langle ATN, R \rangle$, $ATN= \langle E,C,X \rangle $}
    \KwResult{true if TRN is time-resource-consistent}
    $N \leftarrow E$\;
    \For{$\sigma \leftarrow \text{permutation of } N$}{
        \If{\texttt{resource\_consistent(R, $\sigma$)} }{
            $ATN' = (E,C \cup \text{encode\_as\_stcs}(\sigma), X)$ \;
            \If{TC(ATN)}{
                \Return true\;
            }
        }
    }
    \Return false\;
    \caption{Time-resource-consistency of a TRN }
\end{algorithm}

\vspace{10mm}

The implementation of \texttt{resource\_consistent} follows from lemma \ref{resource_checking} and is straightforward - we can evaluate $U_s(s(e))$ for all events $e \in RE$ (which can be done only knowing their relative ordering), and if it is always non-positive then we return true.

To improve the performance w.r.t algorithm \ref{hl_algo}, we use off-the-shelf constraint propagation software (PyConstraint). Let's consider $RE={e_1, ..., e_N}$. We define a problem using $N$ variables:  $x_1, x_2, ..., x_N \in \{ 1, ..., N \}$, such that $x_j=i$ if $e_i$ is $j$-th in the temporal order, i.e. $x_1, ..., x_N$ represent the permutation $\sigma$. We used the following pruners which, when combined, make the CP solver behave similarly to algorithm \ref{hl_algo}, but ignoring some pruned permutations:
\begin{itemize}
\setlength\itemsep{0.2em}
\item \textbf{all\_different\_constraint} - ensure that all variables are different, i.e. they actually represent a permutation. This is standard constraint available in most CP software packages.
\item \textbf{time\_consistent} - making sure that the temporal constraints implied by the permutation are not making the $ATN$ inconsistent. Even when the variables are partially instantiated, we can compute a set of temporal constraints implied by the partially instantiated permutation. For example if we only know that $x_1 = 3$, $x_5 = 2$ and $x_6=5$, it implies $e_5 \leq e_1 \leq e_6$.
\item \textbf{resource\_consistent} - ensure that for all $e_1, ..., e_n \in RE$, resource usage just after $e_i$ is non-positive. Even if the order is partially specified we can still evaluate it. A subtlety which needs to be considered is that we need to assume that all the events for which $x_i$ is undefined and which are generating ($\delta(e_i) < 0$) could be scheduled before all the points for which order is defined. For example if $n = 4$ and $\Delta(e_1) = 4$, $\Delta(e_2) = -6$, $\Delta(e_3) = 3$, $\Delta(e_4) = 4$ and we only know that $x_1 = 3$, $x_3 = 2$, then we have to assume that all the generation happened before the points that we know, i.e. initially resource usage is $-6$, then after $e_3$ is is $-3$, and after $e_1$ it is $1$, therefore violating the constraint. But if in that scenario we would instead have $\Delta(e_1) = 2$ and we hadn't had assumed that all the unscheduled generation $-6$ happens at the beginning, we would have falsely deduced that the given variable assignment could never be made resource consistent.
\end{itemize}

\subsubsection{TRN limitations - Going Beyond Fixed Schedules}
Notice that CP algorithm does not require the schedule to be fixed. For example, we could consider $ATN$ to be $STNU$ and $TC$ to be dynamic controllability (\cite{vidal1996dealing}). There, we seek an execution strategy, rather than a schedule. While this can be implemented for a TRN, there is an important limitation to that approach. Even though temporal schedule is dynamic, the schedule implied by resource constraints is static - we cannot change $\sigma$ dynamically during execution.

\begin{figure}[H]
\begin{center}
\includegraphics[width=0.48\textwidth,trim={0.23cm 0.23cm 0.00cm 0.37cm},clip]{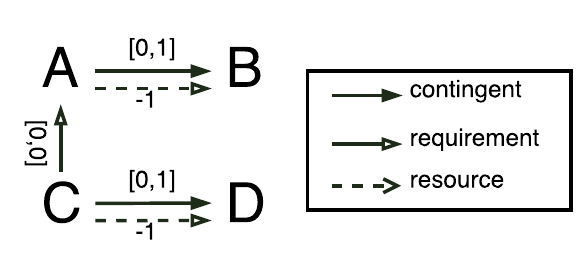}
\caption{TRN cannot select $\sigma$ dynamically. Number below a  simple resource constraint represents $r$.}
\label{fig:stnu_counter}
\end{center}
\end{figure}

\vspace{10mm}

Figure \ref{fig:stnu_counter} shows an example where $TRN$ would report no solution found. However, if we ignore the resource constraints and find a dynamic execution strategy satisfying temporal constraints, it never violates the resource constraints, as they are both generating. The reason TRN fails to find the solution is due to the fact that $B$ and $D$ are both in the set $RE$ and TRN's solution attempts to fix the ordering between $B$ and $D$, which is impossible to do statically in this example.


\section{Experiments}

\begin{figure*}
\begin{center}
\includegraphics[width=\textwidth]{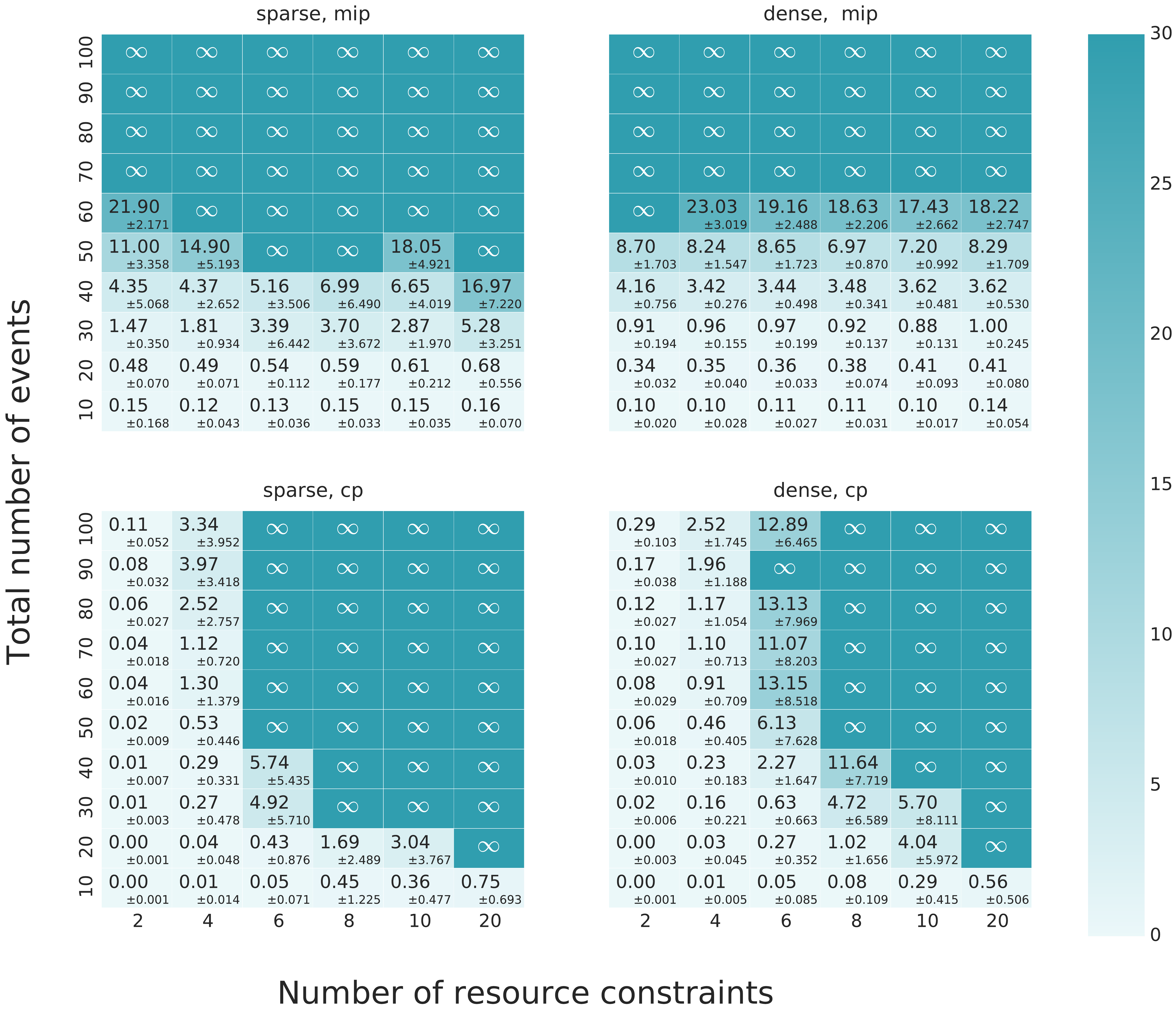}
\caption{Comporison of execution time for different types of networks, or $\infty$ if the solver failed to compute the result within the time limit. Y axis represents the number of events in the temporal network ($N$). X axis represents the number of resource constraints ($R$). Top portion of the figure was obtained using the MIP-based solver, while bottom part of the figure was obtained using CP-based solver. The left side of the figure represents computations on \textit{sparse} networks, which in this case means that the total number of temporal constraints is $2N$. On the right side we have \textit{dense} networks, meaning that the number of temporal constraints is $N^2/2$. This figure was computed by running the experiment for every set of parameters multiple times, but each time with different randomly generated instance. Numbers in bottom right corner of each cell are corresponding standard deviations.}
\label{fig:execution_time}
\end{center}
\end{figure*}

\begin{figure*}
\begin{center}
\includegraphics[width=\textwidth]{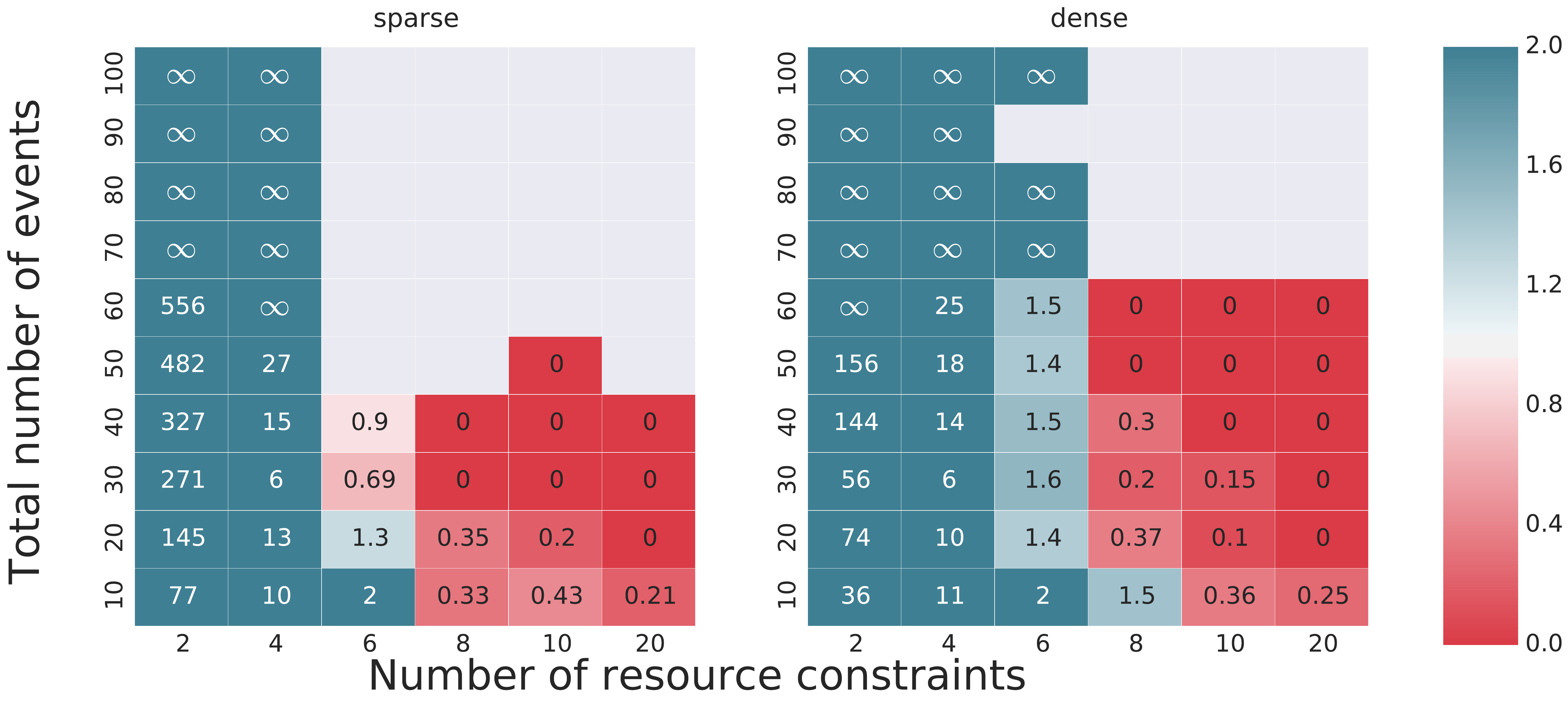}
\caption{Number on the figure represents execution time using MIP-based algorithm divided by execution time using CP-based algorithm. Notice that in particular $0$, means that CP-based algorithm failed to compute the results within the time limit and $\infty$ means that MIP-based algorithm timed out. The missing cells correspond to the networks where both of the algorithms timed out and therefore their execution time cannot be compared.   }
\label{fig:when_better}
\end{center}
\end{figure*}

\subsection{TRN over STN}
To understand the performance of our novel CP algorithm, we used the proposed MIP approach as a baseline. We used Gurobi as a MIP solver. Both algorithms were used to determine time-resource consistency for TRN over Simple Temporal Network. In case of MIP based algorithm, all the temporal constraints $l \leq x - y \leq u$, where $l,b \in \mathbb{R}$ and $x,y \in E$ can be expressed as linear constraints, with $x$ and $y$ being continuous variables. In case of CP algorithm, we used Floyd-Warshall to determine temporal consistency as suggested in \cite{dechter1991temporal}. The test cases were created by the following procedure:

  \begin{enumerate}
  \setlength\itemsep{0.1em}
  \item Specify number of events $N \geq 2$, number of temporal constraints $T\geq 2$ and number of resource constraints $R\geq 2$
  \item Create a random schedule $s$ for events in $N$ with times in the interval $(0.0, 1.0)$.
  \item Create $T$ time constraints using the following procedure:
    \begin{enumerate}
    \item Choose start and end points $x,y \in N$.
    \item Choose a type of constraint - lower bound or upper bound, each with probability $0.5$
    \item Let $d=s(y) - s(x)$ and chose number $d'$ form exponential distribution with $\lambda = 1 / \sqrt{d}$. For lower-bound set $l = d - d'$. For upper bound set $u = d + d'$.
    \end{enumerate}
  \item Choose number of generating constraints $G$ as a random integer between $1$ and $R-1$ and set number of consuming constraints as $C = R - G$ (so that there's at least on constraint of each type).
  \item Create $G$ generating constraints using the following procedure, by randomly choosing $x,y \in N$ and setting $r$ to a random number between $-1$ and $0$.
  \item Create $C$ consuming constraints using the following procedure.
    \begin{enumerate}
    \item Choose start and end points $x,y \in N$.
    \item Let $m$ be the maximum resource usage value between $x$ and $y$ considering all the resource constraints generated so far. If $m = 0$ repeat the process.
    \item choose $r$ from uniform distribution between $0$ and $-m$.
    \end{enumerate}
  \end{enumerate}

We considered $10$ different values of $N$: $10, 20, ..., 100$. We considered $6$ different values of $R$: $2, 4, 6, 8, 10, 20$. We defined two types of networks - sparse, where $T = 2N$ and dense where $T = N^2/2$. For every set of parameters we run $5$ trials. We set the time limit to $30$ seconds. The results are presented on figure \ref{fig:execution_time}. We can see there exists a set of parameters where only CP managed to find the solution MIP exceed the time limit and vice versa. Figure \ref{fig:when_better} compares execution time of CP and MIP algorithms. The cells colored in blue are the ones where CP algorithm is faster and the cells colored in red are the ones where MIP based algorithm is better. One can see that CP is much better suited for large temporal networks with small number of resource constraints, while MIP scales much better with the number of resource constraints.

\subsection{TRN over pSTN}
To demonstrate extensibility of our approach we have implemented a version of TRN network, where the underlying temporal network is a pSTN (\cite{Fang2014}). pSTN extends the notion of STN. It defines STN-like events and edges as \textbf{actiavated time points} and \textbf{free constraints} respectively. It extends STN with \textbf{received time points}, which are determined by the environment. Every received time point is defined by corresponding \textbf{uncertain duration (uDn)} constraint, which specifies a probability distribution over duration between some activated time point and the received time point. Due to that extension, the notion of consistency $TC(ATN)$ becomes probabilistic; rather than asking \textit{is this pSTN consistent?}, we ask is \textit{is this pSTN consistent with probability $p$?}. Since pSTN is an extension of STN, it is an $ATN$. Given the choice of $p$ we can use probabilistic consistency as $TC$. Therefore we can use CP algorithm to check networks consistency. Example scenario and the schedule obtained by the algorithm is presented in the introduction.

%

\section{Conclusion}
In this paper, we have introduced Time Resource Networks, which allow one to encode many resource-constrained scheduling problems. We defined them in a way that permits use of many different notions of temporal networks to constrain schedules. We introduced a novel CP algorithm for determining time-resource consistency of a TRN and we compared it MIP baseline. We have demonstrated that our algorithm achieves superior performance for networks with large number of temporal constraints and small number of resource constraints. In addition, we have shown that CP algorithm is flexible and can support recently introduced probabilistic Simple Temporal Networks \cite{Fang2014}.


\bibliographystyle{named}
\bibliography{scheduling}

\begin{thebibliography}{}

\bibitem[\protect\citeauthoryear{Bartusch \bgroup \em et al.\egroup
  }{1988}]{bartusch1988scheduling}
Martin Bartusch, Rolf~H M{\"o}hring, and Franz~J Radermacher.
\newblock Scheduling project networks with resource constraints and time
  windows.
\newblock {\em Annals of operations Research}, 16(1):199--240, 1988.

\bibitem[\protect\citeauthoryear{Bellman}{1956}]{bellman1956mathematical}
Richard Bellman.
\newblock Mathematical aspects of scheduling theory.
\newblock {\em Journal of the Society for Industrial and Applied Mathematics},
  4(3):168--205, 1956.

\bibitem[\protect\citeauthoryear{Dechter \bgroup \em et al.\egroup
  }{1991}]{dechter1991temporal}
Rina Dechter, Itay Meiri, and Judea Pearl.
\newblock Temporal constraint networks.
\newblock {\em Artificial intelligence}, 49(1):61--95, 1991.

\bibitem[\protect\citeauthoryear{Fang \bgroup \em et al.\egroup
  }{2014}]{Fang2014}
Cheng Fang, Peng Yu, and Brian~C. Williams.
\newblock Chance-constrained probabilistic simple temporal problems.
\newblock In {\em AAA-14}, 2014.

\bibitem[\protect\citeauthoryear{Garey \bgroup \em et al.\egroup
  }{1976}]{garey1976complexity}
Michael~R Garey, David~S Johnson, and Ravi Sethi.
\newblock The complexity of flowshop and jobshop scheduling.
\newblock {\em Mathematics of operations research}, 1(2):117--129, 1976.

\bibitem[\protect\citeauthoryear{Johnson}{1954}]{johnson1954optimal}
Selmer~Martin Johnson.
\newblock Optimal two-and three-stage production schedules with setup times
  included.
\newblock {\em Naval research logistics quarterly}, 1(1):61--68, 1954.

\bibitem[\protect\citeauthoryear{Markowitz and
  Manne}{1957}]{markowitz1957solution}
Harry~M Markowitz and Alan~S Manne.
\newblock On the solution of discrete programming problems.
\newblock {\em Econometrica: journal of the Econometric Society}, pages
  84--110, 1957.

\bibitem[\protect\citeauthoryear{Morris and
  Muscettola}{2005}]{morris2005temporal}
Paul~H Morris and Nicola Muscettola.
\newblock Temporal dynamic controllability revisited.
\newblock In {\em AAAI}, pages 1193--1198, 2005.

\bibitem[\protect\citeauthoryear{Morris \bgroup \em et al.\egroup
  }{2001}]{morris2001dynamic}
Paul Morris, Nicola Muscettola, Thierry Vidal, et~al.
\newblock Dynamic control of plans with temporal uncertainty.
\newblock In {\em IJCAI}, volume~1, pages 494--502. Citeseer, 2001.

\bibitem[\protect\citeauthoryear{Patterson}{1984}]{patterson1984comparison}
James~H Patterson.
\newblock A comparison of exact approaches for solving the multiple constrained
  resource, project scheduling problem.
\newblock {\em Management science}, 30(7):854--867, 1984.

\bibitem[\protect\citeauthoryear{Pritsker \bgroup \em et al.\egroup
  }{1969}]{pritsker1969multiproject}
A~Alan~B Pritsker, Lawrence~J Waiters, and Philip~M Wolfe.
\newblock Multiproject scheduling with limited resources: A zero-one
  programming approach.
\newblock {\em Management science}, 16(1):93--108, 1969.

\bibitem[\protect\citeauthoryear{Vidal and Ghallab}{1996}]{vidal1996dealing}
Thierry Vidal and Malik Ghallab.
\newblock Dealing with uncertain durations in temporal constraint networks
  dedicated to planning'.
\newblock In {\em ECAI}, pages 48--54. PITMAN, 1996.

\bibitem[\protect\citeauthoryear{Wagner}{1959}]{wagner1959integer}
Harvey~M Wagner.
\newblock An integer linear-programming model for machine scheduling.
\newblock {\em Naval Research Logistics Quarterly}, 6(2):131--140, 1959.

\end{thebibliography}

\end{document}